\documentclass{article} % For LaTeX2e

\usepackage[dvipsnames]{xcolor}

\usepackage{iclr2024_workshop,times}

%%%%%%%%%%%%%%%%%%%%%%%%%%%%%%%%%%%%%%%%%%%%%%%%%%%%%%%%%%%%%%%%%%%%%%
%%% Controlling behavior of output file
%%%%%%%%%%%%%%%%%%%%%%%%%%%%%%%%%%%%%%%%%%%%%%%%%%%%%%%%%%%%%%%%%%%%%%
%%% Include Appendix (0: False, 1: True)

%%% View Changes in Color or final version (0: Final, 1: Changes)
\def\viewchanges{1}
%%% View Authors (0: No, 1: Yes)
\def\viewauthors{1}
%%% View Keywords (0: No, 1: Yes)

%%% use hyperlinks (0: No, 1: Yes)
\def\usehyperliks{1}
%%% preprint (0: No, 1: Yes)
\def\preprint{1}

%%%%%%%%%%%%%%%%%%%%%%%%%%%%%%%%%%%%%%%%%%%%%%%%%%%%%%%%%%%%%%%%%%%%%%
%%%%%%%%%%%%%%%%%%%%%%%%%%%%%%%%%%%%%%%%%%%%%%%%%%%%%%%%%%%%%%%%%%%%%%

% Optional math commands from https://github.com/goodfeli/dlbook_notation.
%%%%% NEW MATH DEFINITIONS %%%%%

\usepackage{amsmath,amsfonts,bm}

% Mark sections of captions for referring to divisions of figures

% Highlight a newly defined term

% Figure reference, lower-case.

% Figure reference, capital. For start of sentence

% Section reference, lower-case.

% Section reference, capital.

% Reference to two sections.

% Reference to three sections.

% % % Reference to an equation, lower-case.
% \def\eqref#1{\ref{#1}}
% % % Reference to an equation, upper case
% \def\Eqref#1{Equation~\ref{#1}}
% A raw reference to an equation---avoid using if possible

% Reference to a chapter, lower-case.

% Reference to an equation, upper case.

% Reference to a range of chapters

% Reference to an algorithm, lower-case.

% Reference to an algorithm, upper case.

% Reference to a part, lower case

% Reference to a part, upper case

\def\1{\bm{1}}

% Random variables

% rm is already a command, just don't name any random variables m

% Random vectors

% Elements of random vectors

% Random matrices

% Elements of random matrices

% Vectors

% Elements of vectors

% Matrix

% Tensor
\DeclareMathAlphabet{\mathsfit}{\encodingdefault}{\sfdefault}{m}{sl}
\SetMathAlphabet{\mathsfit}{bold}{\encodingdefault}{\sfdefault}{bx}{n}

% Graph

% Sets

% Don't use a set called E, because this would be the same as our symbol
% for expectation.

% Entries of a matrix

% entries of a tensor
% Same font as tensor, without \bm wrapper

% The true underlying data generating distribution

% The empirical distribution defined by the training set

% The model distribution

% Stochastic autoencoder distributions

 % Laplace distribution

\newcommand{\E}{\mathbb{E}}

\newcommand{\R}{\mathbb{R}}

% Wolfram Mathworld says $L^2$ is for function spaces and $\ell^2$ is for vectors
% But then they seem to use $L^2$ for vectors throughout the site, and so does
% wikipedia.

 % See usage in notation.tex. Chosen to match Daphne's book.

\usepackage{url}            % simple URL typesetting
\usepackage{booktabs}       % professional-quality tables
\usepackage{amsfonts}       % blackboard math symbols
\usepackage{nicefrac}       % compact symbols for 1/2, etc.
\usepackage{microtype}      % microtypography
\usepackage{amsthm}
\usepackage{amsmath}
\usepackage{bbm}
\usepackage{enumitem}

\if\usehyperliks1
	\usepackage[colorlinks, citecolor=blue, linkcolor=magenta]{hyperref}       % hyperlinks
\fi
\definecolor{william}{rgb}{1.,0.,0.}
\definecolor{florian}{rgb}{0.,1.,0.}

% Add for algo
\usepackage{graphicx,wrapfig,lipsum}
\usepackage{subfigure}
\usepackage{verbatim}
\usepackage{bm}
\usepackage{adjustbox}
\usepackage{footnote}
\usepackage{algorithm}
\usepackage{algorithmic}
\usepackage{cleveref}

% new commands:
\newtheorem{theorem}{Theorem}[section]

\newtheorem{prop}[theorem]{Proposition}
\newtheorem{rem}[theorem]{Remark}
\newtheorem{cor}[theorem]{Corollary}

\let\P\undefined%
\renewcommand{\1}{\mathbf{1}}
\newcommand{\P}{\mathbb{P}}

\renewcommand{\E}{\mathbb{E}} % expectation
\newcommand{\N}{\mathbb{N}} 
\renewcommand{\R}{\mathbb{R}}

%%% chasing changes with colors:
%\usepackage[dvipsnames]{xcolor}

\let\del\undefined
\let\com\undefined

%%% display changes in color:
\if\viewchanges1
	
	\newcommand{\del}[1]{{\color{red}{#1}}}
	\newcommand{\com}[1]{{\color{orange}{#1}}}
	
%%% apply changes:
\else
	
	\newcommand{\del}[1]{}
	\newcommand{\com}[1]{}
	
\fi

\title{Learning Chaotic Systems and Long-Term Predictions with Neural Jump ODEs}

% Authors must not appear in the submitted version. They should be hidden
% as long as the \iclrfinalcopy macro remains commented out below.
% Non-anonymous submissions will be rejected without review.

\author{Florian Krach \\
Department of Mathematics \\
ETH Zurich \\
Zurich, Switzerland\\
\texttt{florian.krach@math.ethz.ch}
\And 
Josef Teichmann \\
Department of Mathematics \\
ETH Zurich \\
Zurich, Switzerland\\
\texttt{josef.teichmann@math.ethz.ch}
}

% The \author macro works with any number of authors. There are two commands
% used to separate the names and addresses of multiple authors: \And and \AND.
%
% Using \And between authors leaves it to \LaTeX{} to determine where to break
% the lines. Using \AND forces a linebreak at that point. So, if \LaTeX{}
% puts 3 of 4 authors names on the first line, and the last on the second
% line, try using \AND instead of \And before the third author name.

\if\viewauthors1
    \if\preprint1
        \iclrpreprint
    \else
        \iclrfinalcopy % Uncomment for camera-ready version, but NOT for submission.
    \fi
\fi

\begin{document}

\maketitle

\begin{abstract}
The Path-dependent Neural Jump ODE (PD-NJ-ODE) is a model for online prediction of generic (possibly non-Markovian) stochastic processes with irregular (in time) and potentially incomplete (with respect to coordinates) observations. It is a model for which convergence to the $L^2$-optimal predictor, which is given by the conditional expectation, is established theoretically. Thereby, the training of the model is solely based on a dataset of realizations of the underlying stochastic process, without the need of knowledge of the law of the process. In the case where the underlying process is deterministic, the conditional expectation coincides with the process itself. Therefore, this framework can equivalently be used to learn the dynamics of ODE or PDE systems solely from realizations of the dynamical system with different initial conditions. We showcase the potential of our method by applying it to the chaotic system of a double pendulum. When training the standard PD-NJ-ODE method, we see that the prediction starts to diverge from the true path after about half of the evaluation time. In this work we enhance the model with two novel ideas, which independently of each other improve the performance of our modelling setup. The resulting dynamics match the true dynamics of the chaotic system very closely.
The same enhancements can be used to provably enable the PD-NJ-ODE to learn long-term predictions for general stochastic datasets, where the standard model fails. This is verified in several experiments.
\end{abstract}

\section{Introduction}\label{sec:Introduction}
The Path-dependent Neural Jump ODE (PD-NJ-ODE) \citep{krach2022optimal} is a model for online prediction of generic (possibly non-Markovian) stochastic processes with irregular and potentially incomplete observations. It is the first model for which convergence to the $L^2$-optimal predictor, which is given by the conditional expectation, is established theoretically. Thereby, the training of the model is solely based on a dataset of realizations of the underlying stochastic process, without the need of knowledge of the law of the process. This result was further generalized in the follow-up work \citet{andersson2024extending}.
In particular, let $(X_t)_{t \in [0,T]}$ be a stochastic process taking values in $\R^d$, let $t_i \in [0,T]$ be random observations times for $1 \leq i \leq n$, where $n$ can be a random variable describing the total number of observations (i.e., the number of observations can be different for each realization) and let $M_i \in \{0,1\}^d$ be the corresponding random observation masks, telling which coordinates $X_{t_i,j}$ are observed (if $M_{i_j} =1$) at each observation time $t_i$. 
The $\sigma$-algebra of the currently available information at any time $t \in [0,T]$ is defined as 
\begin{equation*}
\mathcal{A}_{t} := \boldsymbol{\sigma}\left(X_{t_{i}, j}, t_{i}, M_{t_{i}} | t_{i} \leq t,\, j \in \{1 \leq l \leq d | M_{t_{i}, l} = 1  \} \right),
\end{equation*} 
where $\boldsymbol\sigma(\cdot)$ denotes the generated $\sigma$-algebra. 
Then, \citet[Theorem~4.3]{andersson2024extending} states that the output $Y^{\theta^{\min}_{m,N_m}}$ of the PD-NJ-ODE model (where $\theta$ are the trainable parameters of the model, $m$ is the size of the used neural networks and $N_m$ is the number of training paths, i.e., realizations of $X$) converges to the $L^2$-optimal prediction $\hat{X} = (\E[ X_t | \mathcal{A}_t ])_{t \in [0,T]}$ as $m$ tends to $\infty$. 
This convergence holds under weak assumptions on $X$ and the observation framework $(t_i, M_i, n)$, which basically require some integrability properties and continuous differentiability of $t \mapsto \hat{X}_t$.

In \citet{krach2022optimal, andersson2024extending}, the focus lies on optimal prediction of generic stochastic processes, as for example processes defined via an stochastic differential equation, given the currently available information.
In particular, this means that the model never predicts further than until the next observation time, since then the available information changes. If the next observation time is deterministically (or with very high probability) smaller than $r$, then it is unlikely that the model learns to predict well for $t > r$, without getting the new information as input, when it becomes available at the next observation time.
In this work, we focus on a provable training strategy, that makes such long-term predictions possible.

This is of particular importance in the case of a deterministic (given the initial condition) underlying process, as in (chaotic) ODE or PDE systems.
Importantly, in this setting the conditional expectation coincides with the process itself. 
In particular, if $X_0$ is observed, i.e., if $X_0$ is $\mathcal{A}_t$-measurable for any $t \in [0,T]$, then $\hat{X}_t = \E[ X_t | \mathcal{A}_t ] = X_t$. 
Therefore, \citet[Theorem~4.3]{andersson2024extending} implies that the PD-NJ-ODE framework can equivalently be used to learn the dynamics of ODE or PDE systems solely from realizations of the dynamical system with different initial conditions.
This result was already stated in \citet[Appendix~B.3]{krach2022optimal} and was used in the experiments in \citet[Appendix~C.3]{krach2022optimal}.
Even though the theoretical results are promising, it can be seen in the empirical results of \citet[Appendix~C.3]{krach2022optimal} (and in \Cref{fig:DP Comp} left) that the PD-NJ-ODE has problems to predict a chaotic system well over longer time periods, when the prediction is only based on the initial value. In particular, the prediction starts to diverge from the true path after about half of the evaluation time. 
The problem is that during the training, the model never needs to predict so far ahead (since it gets intermediate observations as input). Hence, it also does not learn to do this well.
In this work, we analyse the  PD-NJ-ODE model from the perspective of learning long-term predictions of stochastic or deterministic (differential) systems and introduce two novel ideas, which enhance the training of the model significantly and independently of each other in this context.

\subsection{Related work}\label{sec:Realted work}
This work is based on the sequence of papers introducing NJ-ODE \citep{herrera2021neural}, extending it to a path-dependent setting with incomplete observations \citep{krach2022optimal} and further to noisy observations with dependence between the observation framework and the underlying process \citep{andersson2024extending}.
The focus of this paper lies on long-term predictions (i.e., multiple observation times ahead), with a special emphasis on fully observed (chaotic) deterministic systems. 
The framework of \citep{krach2022optimal} also allows for partially observed chaotic systems, which are not deterministic. Such cases resemble stochastic processes, where the optimal prediction, given by the conditional expectation, is learnt. Hence, they can be treated with the provided result for general stochastic processes.

\citet{NAVONE1995383} were one of the first to use neural networks to learn chaotic dynamics and several other works followed using RNNs, reservoir computing and neural ODEs \citep{Vlachas2018Datadriven, Pathak2018Modelfree, VLACHAS2020191, brenner2022tractable, chen2018neural, Raissi2018Deep}.
\citet{Churchill2022} propose a memory-based residual deep neural network (DNN) architecture to learn chaotic systems from fully or partially observed data and apply it to the chaotic Lorenz 63 and 96 systems.
\citet{pmlr-v202-hess23a} use piece-wise linear RNNs together with teacher forcing to effectively learn chaotic dynamics and provide an extensive overview of related work and numerical comparison to many state-of-the-art models.
Our approach, using neural ODEs, is particularly related to \citet{chen2018neural} and \citet{ODERNN2019}. However, in contrast to all these methods, our approach comes with theoretical learning guarantees even in the most general case of irregularly and incompletely observed path-dependent stochastic processes.

\section{Main results}\label{sec:Main results}
In the results of \citet[Appendix~C.3]{krach2022optimal} we see that the empirical performance of the PD-NJ-ODE applied to chaotic systems could be improved, especially for long prediction horizons, even though the theoretical results suggest that the model should learn to predict chaotic systems correctly at any time. This is related to the inductive bias when training the model with a finite amount of training samples (see \citet[Appendix~B]{andersson2024extending} for more details on the inductive bias of the PD-NJ-ODE). In particular, even if the distribution of the observation times is such that it is (theoretically) possible to have very long periods without observations, the probability of experiencing this necessarily becomes smaller the larger the period is. Hence, the respective training samples where this happens are scarce and consequently the empirical results of the model fall short of the theoretical expectations.

Therefore, we suggest two enhancements of the PD-NJ-ODE model for learning long-term predictions in deterministic (differential) as well as stochastic systems. In \Cref{sec:Convergence of the PD-NJ-ODE in the special case of deterministic systems} we prove that in the deterministic case, the model only taking the initial value as input (and potentially some of the following ones), converges to the same limiting process as the standard model, since all the observations are still used in the loss function. This should improve the inductive bias of the training, since the model is now forced to predict further into the future.  
In \Cref{sec:Long-term predictions} we show that the same training enhancement also leads to accurate long-term predictions in stochastic systems.
Moreover, in \Cref{sec:Output feedback in the PD-NJ-ODE model} we discuss that using \emph{output feedback} (which is known to stabilize the training of dynamical systems) in the PD-NJ-ODE model framework, still yields the same theoretical results.

\subsection{Long-term predictions with PD-NJ-ODE}\label{sec:Long-term predictions with PD-NJ-ODE}
In our context, \emph{long-term predictions} always refer to predictions within the time horizon $[0,T]$, where for any $0\leq s \leq t \leq T$, the information available up to time $s$ is used to predict the process at time $t$. This is a generalization of the standard framework, where we have $s=t$, i.e., where predictions are based on all available information up to the prediction time. 
Importantly, we make no claim for $t>T$. To extend our results for $t > T$, additional assumptions on the time-invariance of the underlying system would be necessary, which we do not require here.

We start by discussing the special case of deterministic (chaotic) systems in \Cref{sec:Convergence of the PD-NJ-ODE in the special case of deterministic systems}, then propose a training procedure in \Cref{sec:Suggested training procedure} based on those insights, and finally show that the same method also applies in the general case of stochastic systems in \Cref{sec:Long-term predictions}.

\subsubsection{The special case of deterministic systems}\label{sec:Convergence of the PD-NJ-ODE in the special case of deterministic systems}
As described in \Cref{sec:Introduction}, the PD-NJ-ODE is a model that can be used to predict a stochastic process $X$ given its previous discrete and possibly incomplete observations summarized in $\mathcal{A}_t$ for any $t\in[0,T]$. In particular this model directly uses every new observation as input when the observation becomes available and predicts for all times afterwards based additionally on this new observation. In the setting of stochastic processes this behaviour makes perfect sense, since every new piece of information changes (improves) the following forecasts. However, in the setting of deterministic (differential) systems, which are fully determined by their initial value, using new observations as input for the PD-NJ-ODE model is (in principle) not needed, since they do not provide any new information about $X$. In particular, we have 
\begin{equation}\label{equ:cond prob identity}
    \hat{X}_t = \E[ X_t | \mathcal{A}_t ] = X_t = \E[ X_t | \mathcal{A}_0 ]
\end{equation}
if $\boldsymbol{\sigma}(X_0) \subseteq \mathcal{A}_0$.
This allows us to formulate the following corollary of \citet[Theorem~4.3]{andersson2024extending}.
\begin{cor}\label{cor:convergece for deterministic system}
    Under the same assumptions as in \citet[Theorem~4.3]{andersson2024extending} with the additional assumption that $X$ is deterministic given its initial value $X_0$, we denote by $\tilde{Y}^{\theta^{\min}_{m,N_m}}$ the output of the PD-NJ-ODE model, where only the fully observed initial value $X_0$ is used as input to the model (in the training). Then, the same convergence result holds for $\tilde{Y}^{\theta^{\min}_{m,N_m}}$ as for ${Y}^{\theta^{\min}_{m,N_m}}$ in \citet[Theorem~4.3]{andersson2024extending}. In particular, $\tilde{Y}^{\theta^{\min}_{m,N_m}}$ converges to $\hat{X}$ as $m \to \infty$.
\end{cor}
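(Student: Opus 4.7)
The plan is to reduce to \citet[Theorem~4.3]{andersson2024extending} by reinterpreting the training setup in terms of an auxiliary observation framework. Specifically, I would consider the framework $(\tilde t_i, \tilde M_i, \tilde n)$ in which $\tilde n = 1$, $\tilde t_1 = 0$, and $\tilde M_1 = \mathbf{1}_d$, so that the induced filtration is the constant one $\tilde{\mathcal{A}}_t = \boldsymbol{\sigma}(X_0)$ on $[0,T]$. Training the PD-NJ-ODE in this framework amounts exactly to training the model with $X_0$ as its sole input, since no further jump inputs are triggered for $t>0$.

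I would then verify that the hypotheses of Theorem~4.3 carry over to this auxiliary framework. The conditions on the observation structure (integrability of $\tilde n$, positivity of inter-observation densities, etc.) become trivial or vacuous in the one-observation case. The only substantive item is the continuous differentiability of the $L^2$-optimal predictor under $\tilde{\mathcal{A}}$, namely $t \mapsto \E[X_t \mid X_0]$. By the deterministic assumption, $X_t$ is itself $\boldsymbol{\sigma}(X_0)$-measurable, so $\E[X_t \mid X_0] = X_t = \hat X_t$ via identity (\ref{equ:cond prob identity}), and its regularity is inherited directly from the standing hypotheses.

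Applying Theorem~4.3 to the auxiliary framework then yields $\tilde Y^{\theta^{\min}_{m,N_m}} \to \E[X_\cdot \mid X_0] = \hat X$ as $m \to \infty$, which is the desired convergence. The main obstacle I anticipate is reconciling the loss actually used in training $\tilde Y^{\theta^{\min}_{m,N_m}}$, which still ranges over all observations at $t_i > 0$ rather than only over the single one at $t_1 = 0$, with the loss of the auxiliary framework. However, since the admissible model class is restricted to be $\tilde{\mathcal{A}}$-adapted and $X_t$ itself lies in this class in the deterministic regime, the $L^2$-projection property of conditional expectation forces any such square-integrable loss to be minimized on the restricted class exactly at $t \mapsto \E[X_t \mid X_0] = X_t$. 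The richer training signal from the observations at $t_i > 0$ is thus consistent with, and does not disturb, the convergence, so the universal-approximation and empirical-consistency steps in the proof of Theorem~4.3 transfer verbatim to $\tilde Y^{\theta^{\min}_{m,N_m}}$.
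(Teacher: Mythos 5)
Your headline reduction --- replacing the observation framework by an auxiliary one with $\tilde n = 1$, $\tilde t_1 = 0$, $\tilde M_1 = \mathbf{1}_d$ and invoking \citet[Theorem~4.3]{andersson2024extending} for that framework --- does not work, and the reason is exactly the point you flag as an ``obstacle'' but then understate. In the auxiliary framework the loss function would only contain the single observation at $t=0$, so it carries no information whatsoever about the process on $(0,T]$: any $\boldsymbol{\sigma}(X_0)$-measurable process matching $X_0$ at time $0$ would be a minimizer, and no convergence to $\E[X_t \mid X_0]$ for $t>0$ could follow. Correspondingly, the assumptions of Theorem~4.3 on the observation times are not ``trivial or vacuous'' in the one-observation case --- they are violated; those conditions are precisely what guarantees that observations enter the loss with positive probability throughout $[0,T]$ and hence that the conditional expectation is identified at \emph{every} $t$. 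The point of the corollary (emphasized in the remark following it) is that the observation framework for the \emph{loss} is unchanged; only the input channel of the model is restricted, which restricts the class of admissible outputs to $\boldsymbol{\sigma}(X_0)$-measurable (constant-filtration-adapted) processes.

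Your final paragraph, in effect, abandons the auxiliary-framework reduction and gives the correct argument, which is essentially the paper's: keep the original loss with all observations $t_i>0$; since $X_0$ is fully observed, $\boldsymbol{\sigma}(X_0)=\mathcal{A}_0$, and by the $L^2$-projection property the unique (up to indistinguishability) minimizer of this loss among $\boldsymbol{\sigma}(X_0)$-measurable processes is $(\E[X_t \mid \mathcal{A}_0])_{t\in[0,T]}$, which in the deterministic case equals $X = \hat X$ by \eqref{equ:cond prob identity}; the model taking only $X_0$ as input can still approximate this target arbitrarily well, and the approximation and Monte-Carlo-consistency steps of the proof of Theorem~4.3 then transfer. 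So the substance is right, but you reach it only after a detour whose first step is incorrect as stated; the proof should be organized around the restricted-admissible-class argument directly, not around re-applying Theorem~4.3 to a degenerate observation framework.
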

\begin{rem}
    We emphasize that all available observations of $X$ are still used in the loss function to train the model, they are only not used as input to the model. Therefore, we still have convergence in the metrics $d_k$ for all $1 \leq k \leq K$.
\end{rem}

\begin{proof}[Proof of \Cref{cor:convergece for deterministic system}.]
    First note that $X_0$ being fully observed implies that $\boldsymbol{\sigma}(X_0) = \mathcal{A}_0$.
    Revisiting the proof of \citet[Theorem~4.3]{andersson2024extending}, it is easy to see that the $L^2$-optimal $\boldsymbol{\sigma}(X_0)$-measurable prediction $(\E[ X_t | \mathcal{A}_0 ])_{t \in [0,T]}$ of $X$ is the unique minimizer (up to indistinguishability) of the loss function amongst all $\boldsymbol{\sigma}(X_0)$-measurable processes. Moreover, it follows as before that the PD-NJ-ODE model can approximate $(\E[ X_t | \mathcal{A}_0 ])_{t \in [0,T]}$ arbitrarily well. Therefore, training the PD-NJ-ODE model, which only takes $X_0$ as input, with the same training framework yields convergence of $\tilde{Y}^{\theta^{\min}_{m,N_m}}$ to $(\E[ X_t | \mathcal{A}_0 ])_{t \in [0,T]}$. Finally, \Cref{equ:cond prob identity} implies that $\tilde{Y}^{\theta^{\min}_{m,N_m}}$ actually converges to $\hat{X} = (\E[ X_t | \mathcal{A}_t ])_{t \in [0,T]}$.
\end{proof}

Clearly, for any PD-NJ-ODE model, taking $X_0$ and some of the following observations as input, the same convergence result holds, since the result holds for the two extreme cases of models taking all or none of the following observations as input.

While \Cref{cor:convergece for deterministic system} might seem to be a trivial extension of the original result, its practical importance is large in the context of learning deterministic (differential) systems. As outlined in the beginning of \Cref{sec:Main results}, the model which only takes $X_0$ as input is forced to learn to predict well over the entire time period. Hence, we effectively improve the inductive bias of the model without changing the theoretical guarantees.

\subsubsection{Suggested training procedure}\label{sec:Suggested training procedure}
We note that using the observations as input is not only disadvantageous but also has  a positive effect on the inductive bias. In particular, every (full) observation that is used as input for the model basically amounts to using this observation as new initial value for the system, hence, increasing the amount of initial values used to train the PD-NJ-ODE model.
This is particularly useful in the beginning of the training. Therefore, we introduce a probability $p \in [0,1]$ and use i.i.d.\ Bernoulli random variables $I_k \sim \text{Ber}(p)$, which decide whether an observation is used as input to the model during training. By decreasing the probability $p$ throughout the training we can therefore first use the observations as additional initial values and then focus the training more and more on predicting well over a long time period. 
Since there exists one solution which is optimal for all $p\in [0,1]$, this procedure additionally encourages the model to learn it.
The effectiveness of this procedure can be seen in \Cref{sec:Experiments}.
Nevertheless, we note that theoretically, choosing any fixed $p \in (0,1)$ leads to the same optimal solution, as proven in the following section.

\subsubsection{General stochastic systems}\label{sec:Long-term predictions}
Similarly as in the case of a deterministic (chaotic) $X$, also in the stochastic case, we might be interested in learning to predict multiple time steps ahead. In the standard framework, the PD-NJ-ODE model only learns to predict until the next observation time, since it converges to $\E[X_t | \mathcal{A}_t]$, which is the optimal prediction of $X_t$ given all information available up to $t$, i.e., all information gathered at observation times before or at $t$. However, the training procedure suggested in \Cref{sec:Suggested training procedure} allows to generalise this, such that the PD-NJ-ODE model learns to correctly predict 
\begin{equation}\label{equ:X-t-s}
    \hat{X}_{t,s} := \E[X_t | \mathcal{A}_{s \wedge t}],    
\end{equation}
for any $0 \leq s ,t \leq T$, which is shown in the following two results.

\begin{cor}\label{cor:convergece for multi times ahead prediction}
    Let $p \in (0,1)$ and $I_k \sim \operatorname{Ber}(p)$ be i.i.d.\ random variables for $k \in \N$, which are independent of $X$ and the observation framework $n, t_i, M_i$. 
    Under the same assumptions as in \citet[Theorem~4.3]{andersson2024extending}, with $\mathbb{A}$ replaced by $\tilde{\mathbb{A}}$ defined below, we denote by $\tilde{Y}^{\theta^{\min}_{m,N_m}}$ the output of the PD-NJ-ODE model, where $I_k$ determines whether the $k$-th observation is used as input to the PD-NJ-ODE model during training. In particular, the model only uses the information available in the $\sigma$-algebra
    \begin{equation*}
        \tilde{\mathcal{A}}_{t} := \boldsymbol{\sigma}\left(X_{t_{i}, j}, t_{i}, M_{t_{i}} | I_i = 1, t_{i} \leq t,\, j \in \{1 \leq l \leq d | M_{t_{i}, l} = 1  \} \right).
    \end{equation*} 
    We denote the corresponding filtration by $\tilde{\mathbb{A}}$.
    Then $\tilde{Y}^{\theta^{\min}_{m,N_m}}$ is $\tilde{\mathbb{A}}$-adapted and converges to the unique (up to indistinguishability) $\tilde{\mathbb{A}}$-adapted minimizer $t \mapsto \E[X_t | \tilde{\mathcal{A}}_t]$ of the loss function.
\end{cor}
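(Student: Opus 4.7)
The plan is to reduce the statement to \citet[Theorem~4.3]{andersson2024extending} by viewing the input-thinning procedure as training a PD-NJ-ODE on a new observation framework. Specifically, I would introduce the thinned framework $(\tilde n, \tilde t_i, \tilde M_i)$ consisting of those $(t_i, M_i)$ with $I_i = 1$, with $\tilde n = \sum_{i=1}^n I_i$. Because $(I_k)$ is i.i.d.\ $\operatorname{Ber}(p)$ and independent of $X$ and $(n, t_i, M_i)$, the thinned framework inherits the integrability and regularity properties required by Theorem~4.3 (with $p$-dependent constants), and $\tilde{\mathcal{A}}_t$ is precisely the information the PD-NJ-ODE actually sees during its forward pass.

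First I would observe that $\tilde Y^{\theta^{\min}_{m,N_m}}$ is $\tilde{\mathbb{A}}$-adapted: this is immediate from the architecture, since at each step the hidden state is updated only from observations with $I_i = 1$ and from the (fully observed) initial value. Next I would adapt the uniqueness-of-minimizer step in the proof of Theorem~4.3. The key tool is the orthogonality of the conditional expectation: for any $\tilde{\mathbb{A}}$-adapted process $Y$,
\begin{equation*}
    \E\bigl[\|X_t - Y_t\|^2\bigr] = \E\bigl[\|X_t - \E[X_t\mid\tilde{\mathcal{A}}_t]\|^2\bigr] + \E\bigl[\|\E[X_t\mid\tilde{\mathcal{A}}_t] - Y_t\|^2\bigr],
\end{equation*}
so $t \mapsto \E[X_t\mid\tilde{\mathcal{A}}_t]$ is the pointwise minimizer. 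Applying the same decomposition at each observation time appearing in the PD-NJ-ODE loss (the targets remain the original observations, even though only a fraction of them are inputs) shows that $t \mapsto \E[X_t\mid\tilde{\mathcal{A}}_t]$ is the unique (up to indistinguishability) minimizer of the full PD-NJ-ODE loss among $\tilde{\mathbb{A}}$-adapted candidates.

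After that, I would invoke the universal-approximation portion of the original proof: the hypothesis class of PD-NJ-ODEs with input filtered by $(I_k)$ is dense in the $\tilde{\mathbb{A}}$-adapted c\`adl\`ag processes that are continuously differentiable between the thinned observation times, and $t\mapsto\E[X_t\mid\tilde{\mathcal{A}}_t]$ lies in this class by the regularity assumption transported to the thinned framework; the empirical minimiser then converges in the metrics $d_k$ as $m\to\infty$, exactly as in the original argument. The main obstacle will be verifying that the path-regularity assumption (continuous differentiability of $t\mapsto \E[X_t\mid\tilde{\mathcal{A}}_t]$ between observation points) survives the coarsening from $\mathcal{A}$ to $\tilde{\mathcal{A}}$. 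Here the independence of $(I_k)$ from $X$ and the observation times is crucial: it lets one express $\E[X_t\mid\tilde{\mathcal{A}}_t]$ as a weighted average (over possible $I$-configurations) of conditional expectations given sub-$\sigma$-algebras of $\mathcal{A}_t$, each of which inherits the required smoothness from the hypothesis on $\hat X$, and this average is preserved under the smooth interpolation between the thinned observation times. Once this regularity transfer is established, the remainder of the proof is a verbatim rerun of \citet[Theorem~4.3]{andersson2024extending} with $\mathcal{A}$ replaced by $\tilde{\mathcal{A}}$.
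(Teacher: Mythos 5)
Your proposal follows essentially the same route as the paper: adaptedness of $\tilde{Y}^{\theta^{\min}_{m,N_m}}$ follows from the restricted input and the architecture, and the rest is a rerun of the proof of \citet[Theorem~4.3]{andersson2024extending} with $\mathcal{A}$ replaced by $\tilde{\mathcal{A}}$ (orthogonality/uniqueness of the conditional expectation among $\tilde{\mathbb{A}}$-adapted processes, then the approximation and convergence steps). Note only that the corollary imposes the hypotheses of that theorem directly on the thinned filtration $\tilde{\mathbb{A}}$ (``with $\mathbb{A}$ replaced by $\tilde{\mathbb{A}}$''), so your final step of transporting the path-regularity assumption from $\mathcal{A}$ to $\tilde{\mathcal{A}}$ --- the only part of your sketch that is not fully rigorous --- is not actually needed.
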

\begin{proof}
    Adaptedness of the model output $\tilde{Y}^{\theta^{\min}_{m,N_m}}$ to $\tilde{\mathbb{A}}$ follows from the used input and the model architecture.
    The remainder of the statement follows equivalently as in the proof of \citet[Theorem~4.3]{andersson2024extending}. 
\end{proof}

In the following proposition we show that for all $s \in [0,T]$, $(\E[X_t | \tilde{\mathcal{A}}_t])_{0 \leq t \leq T}$ coincides with  $(\hat{X}_{t, s})_{0\leq t\leq T}$ conditioned on the event 
$B_s := \{ \forall k \leq n: I_k = \1_{\{ t_k \leq s \}}  \}$, which has positive probability. Hence, the PD-NJ-ODE model learns to predict $(\hat{X}_{t, s})_{0\leq t\leq T}$ on $B_s$.

\begin{prop}\label{prop:convergence multi times ahead}
    For all $s, t \in [0,T]$ we have $\P(B_s) > 0$ and $\P$-a.s.
    \begin{equation}\label{equ:equality for X-t-s}
    \begin{split}
        \1_{B_s} \E[X_t | \tilde{\mathcal{A}}_t] &= \1_{B_s} \hat{X}_{t, s}, \quad \text{and} \\
         \1_{B_s} \tilde{Y}^{\theta^{\min}_{m,N_m}}_t &=  \1_{B_s} \tilde{Y}^{\theta^{\min}_{m,N_m}}_t(\tilde X^{\leq t \wedge s}),
    \end{split}
    \end{equation}
    hence, $\1_{B_s} \left(\tilde{Y}^{\theta^{\min}_{m,N_m}}_t(\tilde X^{\leq t \wedge s})\right)_{0\leq t\leq T}$ converges \citep[as in][Theorem~4.3]{andersson2024extending} to $\1_{B_s} (\hat{X}_{t, s})_{0\leq t\leq T}$ as $m \to \infty$. 
\end{prop}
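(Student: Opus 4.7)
The plan is to establish the three pieces of the statement separately and then combine them via Corollary~\ref{cor:convergece for multi times ahead prediction}. I would first address positivity of $\P(B_s)$, then the two pointwise identities on $B_s$, and finally assemble the convergence claim.

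Positivity is a straightforward conditioning argument: since $(I_k)_{k\in\N}$ is independent of $X$ and of the observation framework $(n, t_1,\ldots,t_n)$,
\[
\P(B_s \mid n, t_1,\ldots,t_n) = p^{N_s}(1-p)^{n-N_s}, \qquad N_s := \#\{k \leq n : t_k \leq s\},
\]
which is strictly positive because $p\in(0,1)$ and $n$ is almost surely finite, so integrating yields $\P(B_s)>0$.

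The main step, and the one I expect to be the hardest, is the conditional-expectation identity
\[
\1_{B_s}\,\E[X_t \mid \tilde{\mathcal{A}}_t] \;=\; \1_{B_s}\,\E[X_t \mid \mathcal{A}_{s\wedge t}] \;=\; \1_{B_s}\,\hat X_{t,s}.
\]
On $B_s$ the selectors $I_k$ are deterministic functions of $(t_k)$ and the fixed constant $s$, so the collection $\{(t_i, X_{t_i}, M_{t_i}) : I_i=1,\, t_i\leq t\}$ generating $\tilde{\mathcal{A}}_t$ coincides pathwise with $\{(t_i, X_{t_i}, M_{t_i}) : t_i\leq t\wedge s\}$, which generates $\mathcal{A}_{s\wedge t}$. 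The technical subtlety is that $\tilde{\mathcal{A}}_t$ may additionally carry information about which $I_k$ equal $1$, and one has to argue that this extra information is independent of $X$ given the selected observations so that it disappears from the conditional expectation. A clean way to do this is to decompose $\Omega$ into the disjoint events $\{(I_k)_{k\leq n}=\vec\imath\}$ indexed by binary patterns $\vec\imath$, use the independence of $(I_k)$ from $(X, n, t_i, M_i)$ on each such event to identify $\tilde{\mathcal{A}}_t$ with $\mathcal{A}_{s\wedge t}$ up to an independent factor, and then specialise to the pattern that defines $B_s$.

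The second pointwise identity is a measurability statement: by construction of the PD-NJ-ODE, $\tilde Y^{\theta^{\min}_{m,N_m}}_t$ is a measurable functional of the input stream, which consists exactly of the observations $(t_i, X_{t_i}, M_{t_i})$ with $I_i=1$ and $t_i\leq t$; on $B_s$ this stream equals $\tilde X^{\leq t\wedge s}$. Finally, Corollary~\ref{cor:convergece for multi times ahead prediction} yields $\tilde Y^{\theta^{\min}_{m,N_m}} \to \E[X_\cdot\mid\tilde{\mathcal{A}}_\cdot]$ in the metrics $d_k$ as $m\to\infty$; multiplying by the bounded measurable indicator $\1_{B_s}$ preserves the convergence, and combining with the two identities on $B_s$ gives the claimed convergence of $\1_{B_s}\,\tilde Y^{\theta^{\min}_{m,N_m}}_\cdot(\tilde X^{\leq \cdot\wedge s})$ to $\1_{B_s}(\hat X_{t,s})_{0\leq t\leq T}$.
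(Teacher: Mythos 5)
Your proposal follows essentially the same route as the paper's proof: positivity of $\P(B_s)$ via independence of the Bernoulli selectors from $(n,t_i)$ (the paper writes your conditioning argument as an explicit sum over the events $\{n=m,\,\tau(s)=t_k\}$), identification on $B_s$ of $\tilde{\mathcal{A}}_t$ with $\mathcal{A}_{t\wedge s}$ and of the model's input stream with $\tilde X^{\leq t\wedge s}$, and then transferring the convergence of \Cref{cor:convergece for multi times ahead prediction} using that multiplying both arguments by $\1_{B_s}$ does not increase $d_k$. The only difference is presentational: your pattern-decomposition remark spells out the measurability/independence point that the paper compresses into ``all observations before and no observations after $s$ are used,'' so the argument is correct and essentially identical.
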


\begin{proof}
    Fix $s \in [0,T]$. $B_s$ can be written as the disjoint union $B_s = \cup_{m \geq 1} \cup_{0 \leq k \leq m} \{ n=m, \tau(s)=t_k, \forall j \leq m: I_j = \1_{\{ t_k \leq s \}}\}$,
    hence, independence of $I_k$ to $n$ and $t_i$ implies
    \begin{equation*}
        \P(B_s) = \sum_{m \geq 1} \sum_{k = 0}^m \P(n = m, \tau(s) = t_k) p^k (1-p)^{m-k} > 0, 
    \end{equation*}
    where $\tau(s)$ is the last observation time before or at time $s$, which shows the first part of the claim.
    Next, we note that on $B_s$ we have $\tilde{\mathcal{A}}_t = \mathcal{A}_{t \wedge s}$, since all observations before and no observations after $s$ are ``used''. By the same reasoning we have on $B_s$ that $\tilde{Y}^{\theta^{\min}_{m,N_m}}_t(\tilde X^{\leq t \wedge s}) = \tilde{Y}^{\theta^{\min}_{m,N_m}}_t(\tilde X^{\leq t}) = \tilde{Y}^{\theta^{\min}_{m,N_m}}_t$, hence \eqref{equ:equality for X-t-s} follows. Finally, we show the convergence in the metrics $d_k$ for any $1 \leq k \leq K$. With \eqref{equ:equality for X-t-s} we have
    \begin{equation*}
    \begin{split}
        d_k &\left( \1_{B_s} \tilde{Y}^{\theta^{\min}_{m,N_m}}_\cdot(\tilde X^{\leq \cdot \wedge s}), \1_{B_s} \hat{X}_{\cdot, s} \right) 
        = d_k \left( \1_{B_s} \tilde{Y}^{\theta^{\min}_{m,N_m}}, \1_{B_s} \E[X_t | \tilde{\mathcal{A}}_t] \right) \\
        & \qquad \leq d_k \left( \tilde{Y}^{\theta^{\min}_{m,N_m}},  \E[X_t | \tilde{\mathcal{A}}_t] \right) \xrightarrow{m \to \infty} 0,
    \end{split}
    \end{equation*}
    where the convergence follows from \Cref{cor:convergece for multi times ahead prediction}.
\end{proof}

\begin{rem}
    There are many equivalent options to choose the observations that are used as input to the model.
    Selecting them via i.i.d.\ Bernoulli random variables is one possibility that we use due to its simplicity. However, the same results can be derived with any other method of choosing the observations as inputs, as long as the probability of arbitrarily long periods without new inputs is positive, i.e., $\P(B_s) >0$ for all $s \in [0,T]$ (where the $I_k$ are defined through the chosen method).
    
    One explicit alternative method is to use exponentially distributed random variables to determine the time within which no observations are used as input. In particular, assuming that the current observation at $t_i$ is used as input and that $e_i \sim \operatorname{Exp}(\lambda)$ for some $\lambda > 0$, the next observation that is used as input is at the first observation time $t_k$ such that $t_k - t_i \geq e_i$. 
    This sub-sampling procedure has the advantage that the probability of not using any observation as input during a certain period only depends on the length of the period but not on the amount of observations during this period (as is the case for the i.i.d.\ Bernoulli random variables). 
\end{rem}

\subsection{Output feedback in the PD-NJ-ODE model}\label{sec:Output feedback in the PD-NJ-ODE model}
Using the output of a discrete dynamical system at time $t$ as additional input to the system at the following time $t+1$ is denoted as \emph{output feedback} in the literature of reservoir computing and known to stabilize the training of such dynamical systems \citep{reinhart2011reservoir}. In line with this, we propose to use output feedback in the PD-NJ-ODE framework and remark that this does not change the theoretical guarantees of the model. Indeed, the model can always just ignore this additional input, hence, the same results hold. However, the inductive bias when training the model with this additional input is better as we see in \Cref{sec:Experiments}.

\section{Experiments}\label{sec:Experiments}
The code with all experiments is available at \url{https://github.com/FlorianKrach/PD-NJODE}. For the experiments on synthetic stochastic datasets, we use the evaluation metric of \citet[Section~8]{krach2022optimal}.
On all synthetic datasets, we use a previously unseen and independent test set to evaluate the models.

In \Cref{sec:Long-term prediction of chaotic systems} we show that the enhanced training framework together with output feedback enables the PD-NJ-ODE to predict (deterministic) chaotic systems with great accuracy over a long time horizon. The enhanced training framework also leads to better long-term predictions for stochastic datasets, as shown on 3 examples in \Cref{sec:Long-term predictions in stochastic systems}.

\subsection{Long-term prediction of chaotic systems}\label{sec:Long-term prediction of chaotic systems}
We showcase the potential of our enhanced PD-NJ-ODE model for deterministic (differential) systems by applying it to the chaotic system of a double pendulum, that was already described and used in \citet[Appendix~B.3 \&~C.3]{krach2022optimal}.
This chaotic system can be described by an ODE in 4 variables (the two angles $\alpha_i$ of the pendulums and their two generalized momenta $p_i$). By choosing the initial value of $\alpha_1 = \alpha_2$ randomly around $\pi$ we introduce small deviations in the initial conditions of this chaotic system, which lead to highly diverse paths.
For more details on the setup of the experiment see \Cref{sec:Experimental Details}.

We use the same setting as in \citet[Appendix~C.3]{krach2022optimal} and compare the standard PD-NJ-ODE model (labelled ``N'') to 
\textbf{i)} the PD-NJ-ODE with output feedback (N-OF), \textbf{ii)} the PD-NJ-ODE with input skipping (N-IS), 
\textbf{iii)} the PD-NJ-ODE with output feedback and input skipping (N-OF-IS),  \textbf{iv)} the PD-NJ-ODE with increasing input skipping (N-IIS) and \textbf{v)} the PD-NJ-ODE with output feedback and increasing input skipping (N-OF-IIS).  In particular, N-IS refers to the model where none of the observations after $X_0$ are use as input and N-IIS refers to the procedure of \Cref{sec:Suggested training procedure}, where we define $p(E) = \max(0, 1-\tfrac{E}{100})$, where $E$ denotes the current training epoch. All of these models use the same architecture and are trained for $200$ epochs. Moreover, we additionally train the N-OF-IIS again with the same architecture, however with a 5 times larger dataset having a $2.5$ times larger observation probability and with $300$ epochs (N-OF-IIS-large).

\begin{table}[t]
\caption{Comparison of MSEs on test set of the different models.}
\label{tab:model comp}
\begin{center}
\begin{tabular}{l | r r r r r r | r}
\toprule
model   & N   & N-OF & N-IS   & N-OF-IS & N-IIS & N-OF-IIS   & N-OF-IIS-large \\ \midrule 
MSE     & 2.492 & 2.024  & 0.719 & 0.641    & 0.474  & 0.468 & 0.181 \\
\bottomrule
\end{tabular}
\end{center}
\end{table}

\begin{figure}[!tb]
\centering
\includegraphics[width=0.49\textwidth]{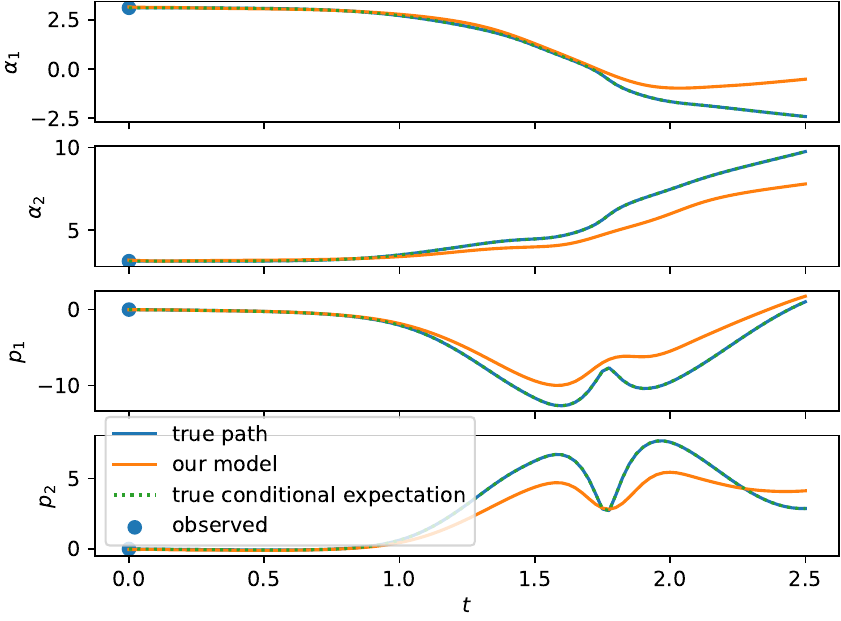}
\includegraphics[width=0.49\textwidth]{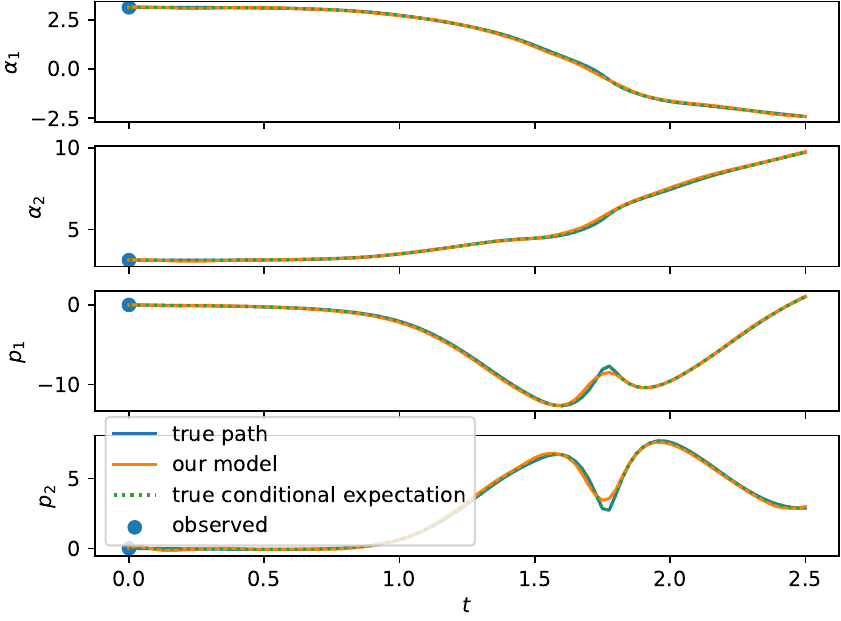}
\includegraphics[width=0.49\textwidth]{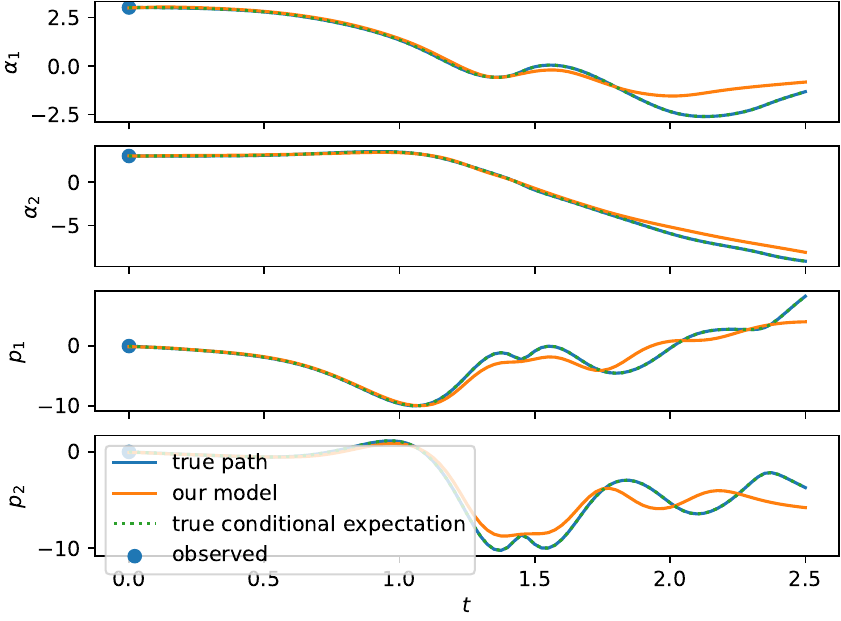}
\includegraphics[width=0.49\textwidth]{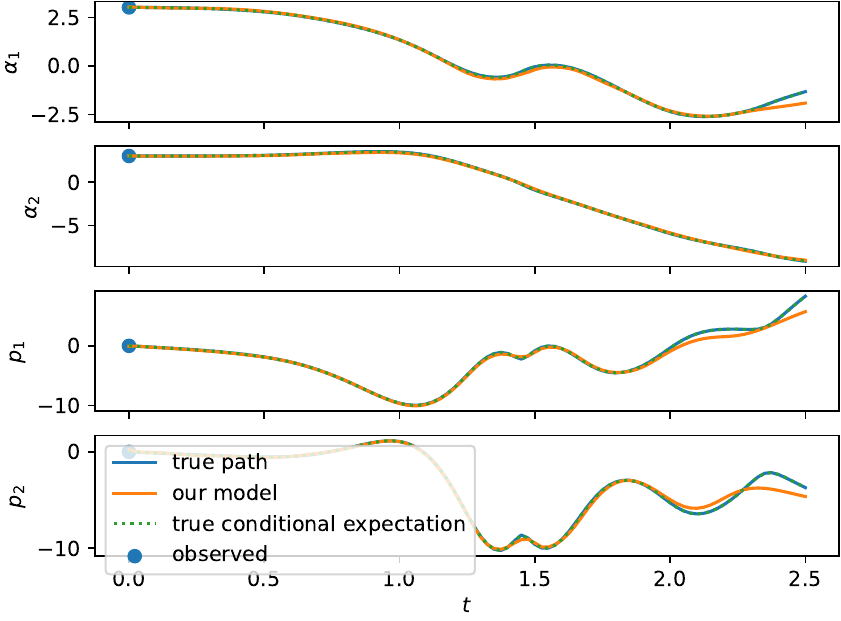}
\caption{Left: test samples of a Double Pendulum with standard training framework (N). Right: the same test samples of the Double Pendulum with the enhanced training framework and larger dataset (N-OF-IIS-large). The conditional expectation coincides with the process, since it is deterministic.}
\label{fig:DP Comp}
\end{figure}

We evaluate the trained models on the test set, by computing the MSE between their predictions and the true paths on a fine equidistant grid (the same as used for sampling the ODE paths).
The results are given in \Cref{tab:model comp}. In particular, we see that output feedback and input skipping independently of each other improve the results, where the impact of input skipping is larger than the one of output feedback.
Moreover, we see a clear increase in performance when switching from input skipping to increasing input skipping (with and without output feedback). In particular, this shows that the model benefits from the additional ``initial values'' used in the beginning of the training. Overall, the performance increases by more than a factor $5$ from N to N-OF-IIS and by more than a factor $13$ from N to N-OF-IIS-large.

In \Cref{fig:DP Comp} we show the comparison of N and N-OF-IIS-large on two samples of the test set. While the standard PD-NJ-ODE model starts to diverge from the true path after about half of the evaluation time, the enhanced PD-NJ-ODE model nearly perfectly predicts the path over the entire period.

\subsection{Long-term predictions in stochastic systems}\label{sec:Long-term predictions in stochastic systems}
We use 3 different geometric Brownian motion (Black--Scholes) dataset with similar specifics as in \citet{herrera2021neural}. 
Two of the datasets have constant drift and are identical except that they either use an observation probability of $10\%$ (BS-Base) or $40\%$ (BS-HighFreq). The $3$rd dataset uses a time-dependent drift and an observation probability of $10\%$ and is otherwise identical to the other datasets (BS-TimeDep).

\begin{figure}
    \centering
    \includegraphics[width=0.49\linewidth]{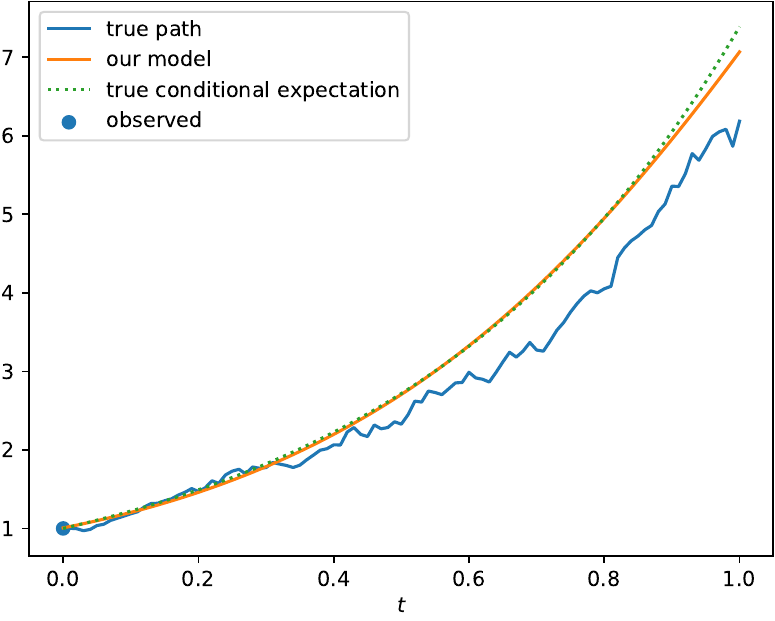}
    \includegraphics[width=0.49\linewidth]{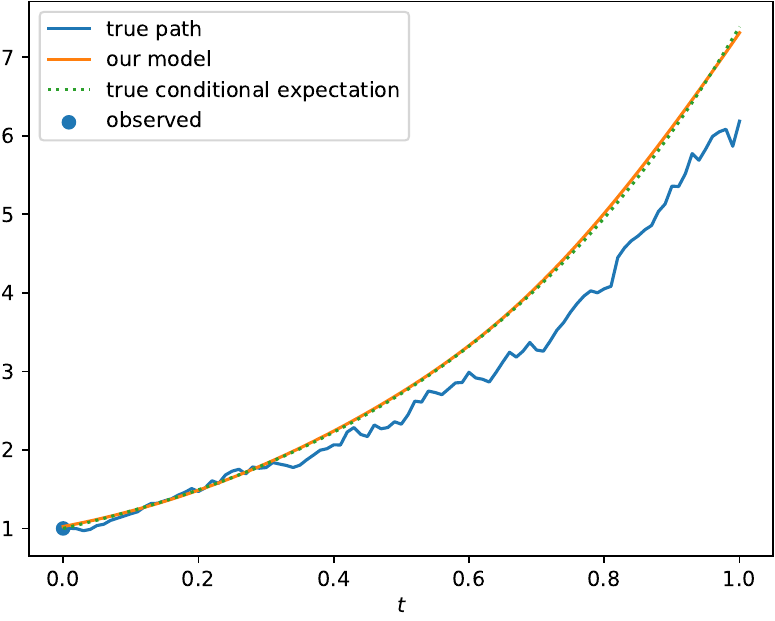}
    \includegraphics[width=0.49\linewidth]{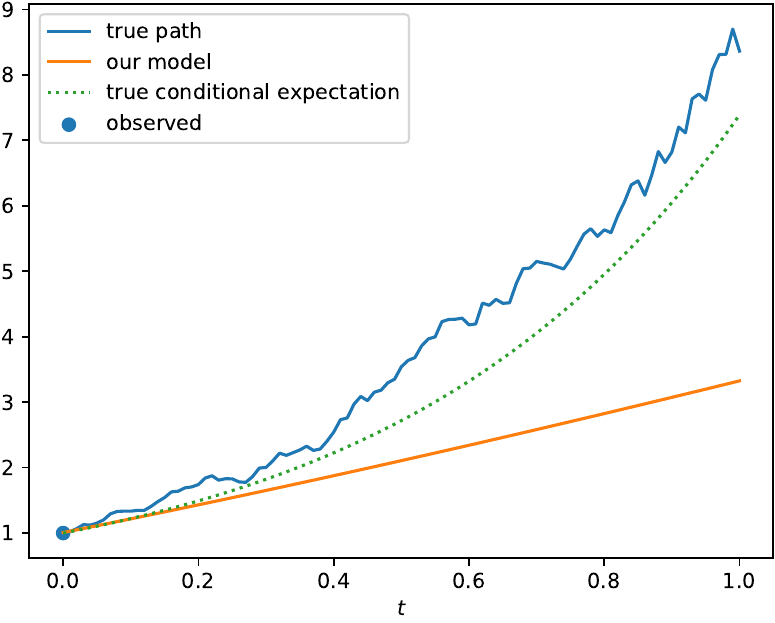}
    \includegraphics[width=0.49\linewidth]{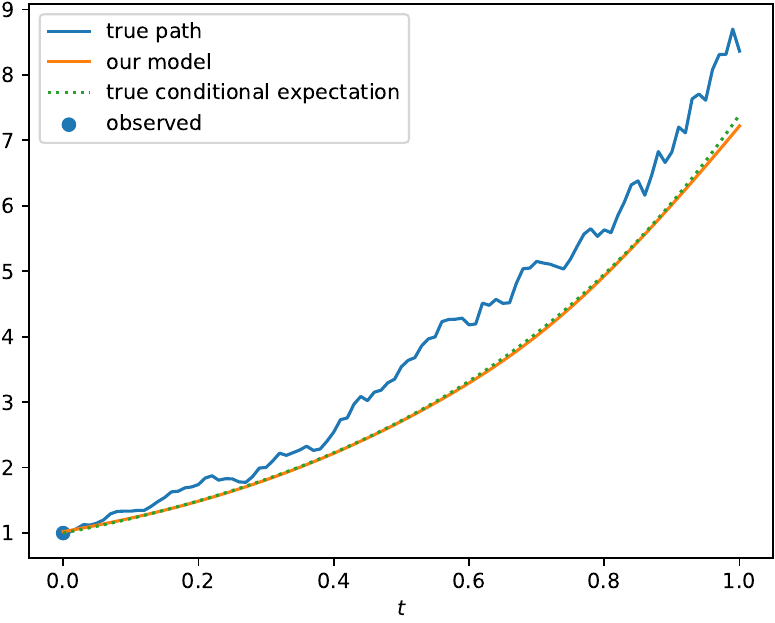}
    \includegraphics[width=0.49\linewidth]{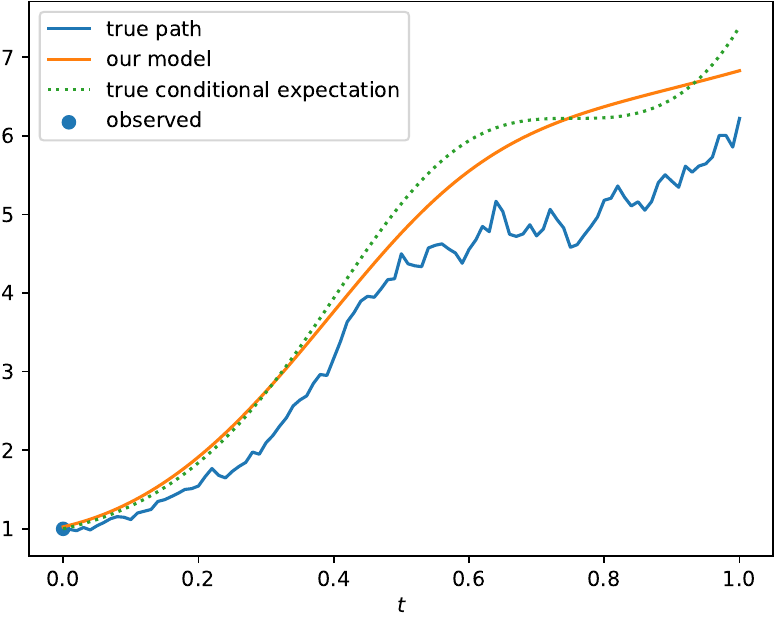}
    \includegraphics[width=0.49\linewidth]{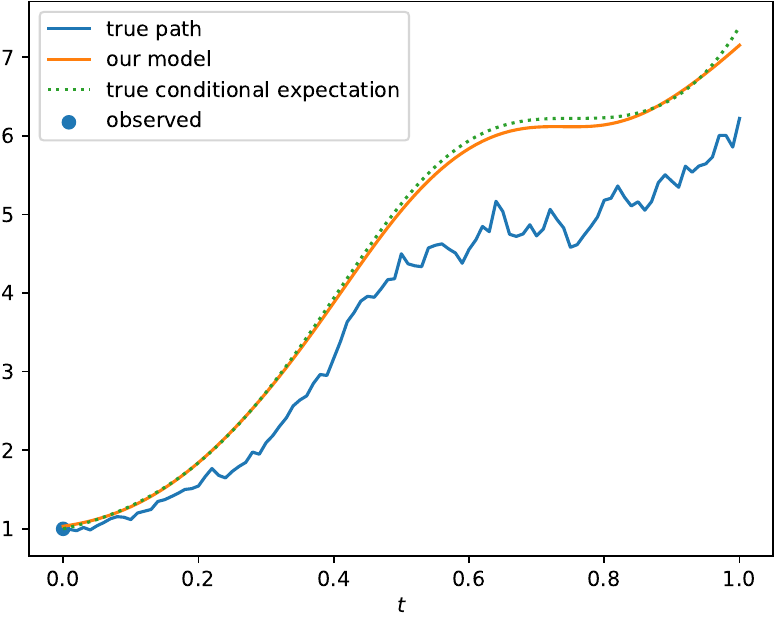}
    \caption{Comparison of the standard (N; left) and enhanced (N-OF-IIS; right) model on a test sample of the BS-Base (top), BS-HighFrequ (middle) and BS-TimeDep (bottom) dataset.}
    \label{fig:BS comparison}
\end{figure}

In the BS-Base dataset, each of the $100$ points of the sampling grid is randomly chosen as observation time with probability $10\%$. Hence, the probability of not having an observation for $100$ consecutive steps is smaller than $0.01\%$. Therefore, it is very unlikely that the model will learn to correctly predict for such a long time (without intermediate observations), when trained with the standard training framework. 
For the BS-HighFreq dataset, this probability is further reduced to below $10^{-22}$, making it even more unlikely that the standard model will learn to correctly predict over long terms.
The difficulty of the BS-TimeDep dataset is that the dynamic changes with time (as in the chaotic system dataset). This makes it more difficult for the standard model to learn, when observations are not far enough apart.
The enhanced training framework (\Cref{sec:Long-term predictions}) should allow the model to circumvent these challenges, as shown theoretically. 
We compare the standard PD-NJ-ODE (N) with the PD-NJ-ODE with output feedback and increasing input skipping (N-OF-IIS), where an observation is used as input to the model with probability $p(E) = \max(0, 1-\tfrac{E}{100})$, decreasing with the training epoch $E$ during the $200$ epochs of training. 

We evaluate and compare both models on the test sets of the 3 datasets and see in \Cref{tab:model comp 2} that the enhanced training framework leads to large improvements in terms of the evaluation metric.
\begin{table}[t]
\caption{Comparison of the minimal evaluation metric of the standard (N) and the enhanced (N-OF-IIS) PD-NJ-ODE model on different datasets.}
\label{tab:model comp 2}
\begin{center}
\begin{tabular}{l | r r r }
\toprule
        & BS-Base ($\times 10^{-3}$)  & BS-HighFreq ($\times 10^{-3}$) & BS-TimeDep ($\times 10^{-2}$) \\ \midrule 
N           &  3.59 & 2506.29 & 3.52  \\
N-OF-IIS    &  0.58 & 0.37 & 0.23 \\
\bottomrule
\end{tabular}
\end{center}
\end{table}
Moreover, this improvement is also well visible in \Cref{fig:BS comparison}. For BS-Base we see the (slightly) degrading performance of the standard model N approaching the time horizon, which is not prevalent for the enhanced model N-OF-IIS.
On the BS-HighFreq dataset the standard model N performs much worse, diverging from the true conditional expectation already after a short time, while the enhanced model predicts nearly perfectly. This was expected, since the model N is much less exposed to predicting over longer time intervals during the training. 
Finally, in contrast to N-OF-IIS, the standard model does not learn the correct dynamic in the long run on the BS-TimeDep dataset.
Comparing the results of N-OF-IIS on the BS-Base and BS-HighFreq dataset, it might seems  surprising at first that the model performs better on the latter dataset, where the model N performs much worse. However, this can be explained by the much larger number of observed samples available in BS-HighFreq that the model can make use of with the enhanced training procedure.

\section{Conclusion}
While it has been known before that the PD-NJ-ODE model can be used to learn (chaotic) deterministic systems, given for example through ODEs or PDEs, a limiting factor for the use in practice was the degrading prediction accuracy for increasing prediction time. In this work we proposed two enhancements of the PD-NJ-ODE model as a remedy for this problem. Simultaneously, these enhancements also enable long-term predictions with the PD-NJ-ODE model in the case of generic stochastic datasets.
In particular, convergence of the model output to a much more general conditional expectation process (with arbitrary sub-information) is guaranteed by the suggested new training procedure. Since there are no known drawbacks, the use of this new training procedure is always recommended.

\section*{Acknowledgement}
The authors thank Jakob Heiss for many deep and insightful discussions about the topics treated in this work and related topics.

\bibliography{references}
\bibliographystyle{iclr2024_workshop}

\appendix
\section{Experimental details}\label{sec:Experimental Details}

\subsection{Double Pendulum}
\paragraph{Dataset.}
We explain the chaotic system of a double pendulum, depicted in Figure~\ref{fig:double pendulum}, following \citet{DoublePendulum, krach2022optimal}.
\begin{figure}[htp!]
\centering
\includegraphics[width=0.49\textwidth]{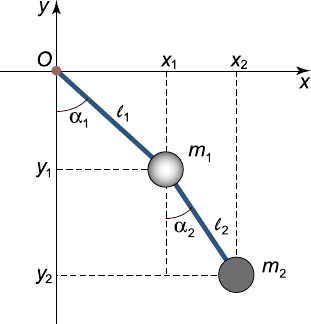}
\caption{A schematic representation of a double pendulum. Picture copied from \citet{DoublePendulum}.}
\label{fig:double pendulum}
\end{figure}
The dynamical system is determined completely by a $4$-dimensional state vector $(\alpha_1, \alpha_2, p_1, p_2)$, where $(\alpha_1, \alpha_2)$ determine the current position of both pendulums and $(p_1, p_2)$ are the so-called generalized momenta, which are related to the velocities of both pendulums. This state vector satisfies the differential system
\begin{align*}
\alpha_1^\prime & = \frac{p_1 l_2 - p_2 l_1 \cos(\alpha_1 - \alpha_2)}{l_1^2 l_2 A_0}, \\
\alpha_2^\prime & = \frac{p_2(m_1+m_2) l_1 - p_1 m_2 l_2 \cos(\alpha_1 - \alpha_2)}{m_2 l_1 l_2^2 A_0 } ,\\
p_1^\prime &= - (m_1 + m_2) g l_1 \sin(\alpha_1) - A_1 + A_2, \\
p_2^\prime &= - m_2 g l_2 \sin(\alpha_2) +A_1 - A_2,
\end{align*}
where
\begin{align*}
A_0 &= [m_1 + m_2 \sin^2(\alpha_1 - \alpha_2)], \\
A_1 &= \frac{p_1 p_2 \sin(\alpha_1 - \alpha_2)}{l_1 l_2 A_0}, \\
A_2 &= \frac{[p_1^2 m_2 l_2^2 - 2 p_1 p_2 m_2 l_1 l_2 \cos(\alpha_1 - \alpha_2) + p_2^2 (m_1 + m_2) l_1^2 ] \sin(2(\alpha_1 - \alpha_2))}{2 l_1^2 l_2^2 A_0^2},
\end{align*}
and $g$ is the gravitational acceleration constant.

We choose $m_1=m_2=l_1=l_2 =1$ and only consider positions where the double pendulum is straight, i.e., both pendulums have the same angle $\alpha := \alpha_1 = \alpha_2$, and the generalized momenta  $p_1, p_2$ are $0$, as initial conditions $X_0$. Hence, the initial points are sampled by randomly sampling $\alpha$ from some distribution on $[0, 2\pi]$. We use $\alpha \sim N(\pi, 0.2^2)$, i.e., normally distributed around to highest point the Pendulum can reach.
For each initial point, we sample a path from the ODE using the Runge--Kutta method of order 4 (RK4) with step size $0.025$ on the time interval $[0,2.5]$, which leads to $101$ time points. Each time point is independently chosen as observation with probability $0.1$. Overall, we sample $20K$ paths out of which $20\%$ are used as validation set. For N-OF-ISS-large, a dataset with $100K$ samples and observation probability $0.25$ is used.

\paragraph{Architecture.}
We use the PD-NJ-ODE, with the following architecture. 
The latent dimension is $d_H = 400$ and all 3 neural networks (encoder, neural ODE and readout network) have the same structure of $1$ hidden layers with $\tanh$ activation function and $200$ nodes. Empirically, the model performed best when using  the recurrent jump network, but no signature terms as input. 

\paragraph{Training.}
All models are trained for $200$ epochs, except for N-OF-ISS-large, which is trained for $300$ epochs. Early stopping is performed based on the loss on the validation set.

\subsection{Geometric Brownian motion datasets}
The datasets are the same as the Black--Scholes datasets in \citet[Section~6.1 and~6.2]{herrera2021neural}.

\paragraph{Dataset.}
The geometric Brownian motion is defined by the SDE
\begin{equation*}
    dX_t = \mu X_t dt + \sigma X_t dW_t,
\end{equation*}
where $W$ is a Brownian motion. For all datasets, we use $\sigma = 0.3$, $X_0 = 1$, and sample $20'000$ paths using the Euler-Maruyama method with time step $\Delta t = 0.01$ on the time interval $[0,T]$. For BS-Base and BS-HighFreq we choose drift $\mu = 2$, while we use the time dependent drift $\mu(t) = \sin(2 \pi t) + 1$ for the BS-TimeDep dataset.
Each time point is independently chosen as observation with probability $0.1$ for the datasets BS-Base and BS-TimeDep, and with probability $0.4$ for BS-HighFreq (leading to shorter intervals between any two observations).

\paragraph{Architecture.}
We use the PD-NJ-ODE, with the following architecture. 
The latent dimension is $d_H = 100$ and all 3 neural networks (encoder, neural ODE and readout network) have the same structure of $1$ hidden layers with $\tanh$ activation function and $100$ nodes. The model uses a recurrent jump network and the signature terms of level $3$ as input. 

\paragraph{Training.}
All models are trained for $200$ epochs. Early stopping is performed based on the loss on the validation set.

\end{document}